\documentclass{article}




\usepackage[final]{neurips_2019}


\usepackage[utf8]{inputenc} 
\usepackage[T1]{fontenc}    
\usepackage{hyperref}       
\usepackage{url}            
\usepackage{booktabs}       
\usepackage{amsfonts}       
\usepackage{nicefrac}       
\usepackage{microtype}      
\usepackage{graphicx}
\usepackage{amsmath}
\usepackage{amsthm}
\usepackage{caption}
\usepackage{xcolor}
\usepackage{thmtools}
\usepackage{thm-restate}
\usepackage[ruled,vlined]{algorithm2e}
\SetKwInput{KwInput}{Input}
\SetKwInOut{KwParameter}{Parameter}
\SetKwInput{KwOutput}{Output}

\SetCommentSty{mycommfont}

\newtheorem{lemma}{Lemma}

\title{Fully Parameterized Quantile Function for Distributional Reinforcement Learning}

%

\author{
 Derek Yang\thanks{Contributed during internship at Microsoft Research.} \\
 UC San Diego\\
\texttt{dyang1206@gmail.com} \\
\And
Li Zhao \\
Microsoft Research \\
\texttt{lizo@microsoft.com} \\
\And
Zichuan Lin \\
Tsinghua University \\
\texttt{linzc16@mails.tsinghua.edu.cn} \\
\And
Tao Qin \\
Microsoft Research \\
\texttt{taoqin@microsoft.com} \\
\And
Jiang Bian \\
Microsoft Research \\
\texttt{jiang.bian@microsoft.com} \\
\And
Tieyan Liu \\
Microsoft Research \\
\texttt{tyliu@microsoft.com} \\
}

\begin{document}

\maketitle

\begin{abstract}
Distributional Reinforcement Learning (RL) differs from traditional RL in that, rather than the expectation of total returns, it estimates distributions and has achieved state-of-the-art performance on Atari Games. The key challenge in practical distributional RL algorithms lies in how to parameterize estimated distributions so as to better approximate the true continuous distribution. Existing distributional RL algorithms parameterize either the probability side or the return value side of the distribution function, leaving the other side uniformly fixed as in C51, QR-DQN or randomly sampled as in IQN. In this paper, we propose fully parameterized quantile function that parameterizes both the quantile fraction axis (i.e., the x-axis) and the value axis (i.e., y-axis) for distributional RL. Our algorithm contains a fraction proposal network that generates a discrete set of quantile fractions and a quantile value network that gives corresponding quantile values. The two networks are jointly trained to find the best approximation of the true distribution. Experiments on 55 Atari Games show that our algorithm significantly outperforms existing distributional RL algorithms and creates a new record for the Atari Learning Environment for non-distributed agents. 

\end{abstract}

\section{Introduction}
Distributional reinforcement learning~\citep{jaquette1973markov,sobel1982variance, white1988mean, morimura2010nonparametric, bellemare2017distributional} differs from value-based reinforcement learning in that, instead of focusing only on the expectation of the return, distributional reinforcement learning also takes the intrinsic randomness of returns within the framework into consideration~\citep{bellemare2017distributional,dabney2018distributional, dabney2018implicit, rowland2018analysis}. The randomness comes from both the environment itself and agent's policy. Distributional RL algorithms characterize the total return as random variable and estimate the distribution of such random variable, while traditional Q-learning algorithms estimate only the mean (i.e., traditional value function) of such random variable.

The main challenge of distributional RL algorithm is how to parameterize and approximate the distribution. In Categorical DQN~\citep{bellemare2017distributional}(C51), the possible returns are limited to a discrete set of fixed values, and the probability of each value is learned through interacting with environments. C51 out-performs all previous variants of DQN on a set of 57 Atari 2600 games in the Arcade Learning Environment (ALE)~\citep{bellemare2013arcade}.
Another approach for distributional reinforcement learning is to estimate the quantile values instead. 
\cite{dabney2018distributional} proposed QR-DQN to compute the return quantiles on fixed, uniform quantile fractions using quantile regression and minimize the quantile Huber loss~\citep{huber:1964} between the Bellman updated distribution and current return distribution. Unlike C51, QR-DQN has no restrictions or bound for value and achieves significant improvements over C51. However, both C51 and QR-DQN approximate the distribution function or quantile function on fixed locations, either value or probability. \cite{dabney2018implicit} propose learning the quantile values for sampled quantile fractions rather than fixed ones with an implicit quantile value network (IQN) that maps from quantile fractions to quantile values. With sufficient network capacity and infinite number of quantiles, IQN is able to approximate the full quantile function.

However, it is impossible to have infinite quantiles in practice. With limited number of quantile fractions, efficiency and effectiveness of the samples must be reconsidered. The sampling method in IQN mainly helps training the implicit quantile value network rather than approximating the full quantile function, and thus there is no guarantee in that sampled probabilities would provide better quantile function approximation than fixed probabilities. 

In this work, we extend the method in \cite{dabney2018distributional} and \cite{dabney2018implicit} and propose to fully parameterize the quantile function. By fully parameterization, we mean that unlike QR-DQN and IQN where quantile fractions are fixed or sampled and only the corresponding quantile values are parameterized, both quantile fractions and corresponding quantile values in our algorithm are parameterized. In addition to a quantile value network similar to IQN that maps quantile fractions to corresponding quantile values, we propose a fraction proposal network that generates quantile fractions for each state-action pair. The fraction proposal network is trained so that as the true distribution is approximated, the $1$-Wasserstein distance between the approximated distribution and the true distribution is minimized. Given the proposed fractions generated by the fraction proposal network, we can learn the quantile value network by quantile regression. With self-adjusting fractions, we can approximate the true distribution better than with fixed or sampled fractions.


We begin with related works and backgrounds of distributional RL in Section 2. We describe our algorithm in Section 3 and provide experiment results of our algorithm on the ALE environment~\citep{bellemare2013arcade} in Section 4. At last, we discuss the future extension of our work, and conclude our work in Section 5.

\section{Background and Related Work}
We consider the standard reinforcement learning setting where agent-environment interactions are modeled as a Markov Decision Process $(\mathcal{X}, \mathcal{A}, R, P, \gamma)$~\citep{Puterman:1994:MDP:528623}, where $\mathcal{X}$ and $\mathcal{A}$ denote state space and action space, $P$ denotes the transition probability given state and action, $R$ denotes state and action dependent reward function and $\gamma \in (0,1)$ denotes the reward discount factor.

For a policy $\pi$, define the discounted return sum a random variable by  $Z^\pi(x,a)=\sum_{t=0}^{\infty}\gamma^tR(x_t, a_t)$, where $x_0=x$, $a_0=a$, $x_{t} \sim P\left(\cdot | x_{t-1}, a_{t-1}\right)$ and $a_t \sim \pi(\cdot|x_t)$. The objective in reinforcement learning can be summarized as finding the optimal $\pi^*$ that maximizes the expectation of $Z^\pi$, the action-value function $Q^\pi(x, a)=\mathbb{E}[Z^\pi(x,a)]$. The most common approach is to find the unique fixed point of the Bellman optimality operator $\mathcal{T}$~\citep{Bellman:1957}: 
\begin{equation*}
    Q^*(x, a)=\mathcal{T} Q^*(x, a) :=\mathbb{E}[R(x, a)]+\gamma \mathbb{E}_{P} \max _{a^{\prime}} Q^*\left(x^{\prime}, a^{\prime}\right).
\end{equation*}

To update $Q$, which is approximated by a neural network in most deep reinforcement learning studies, $Q$-learning~\citep{watkins1989learning} iteratively trains the network by minimizing the squared temporal difference (TD) error defined by
\begin{equation*}
    \delta_{t}^{2}=\left[r_{t}+\gamma \max _{a^{\prime} \in \mathcal{A}} Q\left(x_{t+1}, a^{\prime}\right)-Q\left(x_{t}, a_{t}\right)\right]^{2}
\end{equation*}
along the trajectory observed while the agent interacts with the environment following $\epsilon$-greedy policy.
DQN~\citep{mnih2015humanlevel} uses a convolutional neural network to represent $Q$ and achieves human-level play on the Atari-57 benchmark.

\subsection{Distributional RL}
Instead of a scalar $Q^\pi(x, a)$, distributional RL looks into the intrinsic randomness of $Z^\pi$ by studying its distribution. The distributional Bellman operator for policy evaluation is 
\begin{equation*}
    Z^\pi(x, a) \stackrel{D}{=} R(x, a)+\gamma Z^\pi\left(X^{\prime}, A^{\prime}\right),
\end{equation*}
where $X^\prime \sim P(\cdot|x,a)$ 
and $A^\prime\sim\pi(\cdot|X^\prime)$,  $A\stackrel{D}{=}B$ denotes that random variable $A$ and $B$ follow the same distribution.

Both theory and algorithms have been established for distributional RL. In theory, the distributional Bellman operator for policy evaluation is proved to be a contraction in the $p$-Wasserstein distance~\citep{bellemare2017distributional}. \cite{bellemare2017distributional} shows that C51 outperforms value-based RL, in addition \cite{hessel2018rainbow} combined C51 with enhancements such as prioritized experience replay~\citep{schaul2016prioritized}, n-step updates~\citep{sutton1988learning}, and the dueling architecture~\citep{wang2016dueling}, leading to the Rainbow agent, current state-of-the-art in Atari-57 for non-distributed agents, while the distributed algorithm proposed by ~\cite{kapturowski2018recurrent} achieves state-of-the-art performance for all agents.
From an algorithmic perspective, it is impossible to represent the full space of probability distributions with a finite collection of parameters. Therefore the parameterization of quantile functions is usually the most crucial part in a general distributional RL algorithm.
In C51, the true distribution is projected to a categorical distribution~\citep{bellemare2017distributional} with fixed values for parameterization. QR-DQN fixes probabilities instead of values, and parameterizes the quantile values~\citep{dabney2018implicit} while IQN randomly samples the probabilities~\citep{dabney2018implicit}. We will introduce QR-DQN and IQN in Section 2.2, and extend from their work to ours.



\subsection{Quantile Regression for Distributional RL}


In contrast to C51 which estimates probabilities for $N$ fixed locations in return, QR-DQN~\citep{dabney2018distributional} estimates the respected quantile values for $N$ fixed, uniform probabilities. In QR-DQN, the distribution of the random return is approximated by a uniform mixture of $N$ Diracs,
\begin{equation*}
    Z_{\theta}(x, a):=\frac{1}{N}\sum_{i=1}^{N}\delta_{\theta_i(x, a)},
\end{equation*}
with each $\theta_i$ assigned a quantile value trained with quantile regression.

Based on QR-DQN, \cite{dabney2018implicit} propose using probabilities sampled from a base distribution, e.g. $\tau\in U([0,1])$, rather than fixed probabilities. They further learn the quantile function that maps from embeddings of sampled probabilities to the corresponding quantiles, called implicit quantile value network (IQN). 
At the time of this writing, IQN achieves the state-or-the-art performance on Atari-57 benchmark, human-normalized mean and median of all agents that does not combine distributed RL, prioritized replay~\citep{schaul2016prioritized} and $n$-step update.

\cite{dabney2018implicit} claimed that with enough network capacity, IQN is able to approximate to the full quantile function with infinite number of quantile fractions. However, in practice one needs to use a finite number of quantile fractions to estimate action values for decision making, e.g. 32 randomly sampled quantile fractions as in \cite{dabney2018implicit}. With limited fractions, a natural question arises that, how to best utilize those fractions to find the closest approximation of the true distribution?  

\section{Our Algorithm}
We propose Fully parameterized Quantile Function (FQF) for Distributional RL. Our algorithm consists of two networks, the fraction proposal network that generates a set of quantile fractions for each state-action pair, and the quantile value network that maps probabilities to quantile values. We first describe the fully parameterized quantile function in Section 3.1, with variables on both probability axis and value axis. Then, we show how to train the fraction proposal network in Section 3.2, and how to train the quantile value network with quantile regression in Section 3.3. Finally, we present our algorithm and describe the implementation details in Section 3.4. 

\subsection{Fully Parameterized Quantile Function}
In FQF, we estimate $N$ adjustable quantile values for $N$ adjustable quantile fractions to approximate the quantile function. The distribution of the return is approximated by a weighted mixture of $N$ Diracs given by
\begin{equation}
\label{project_distribution}
    Z_{\theta, \tau}(x, a):=\sum_{i=0}^{N-1}(\tau_{i+1} - \tau_{i})\delta_{\theta_i(x, a)},
\end{equation}
where $\delta_z$ denotes a Dirac at $z \in \mathbb{R}$, $\tau_1,...\tau_{N-1}$ represent the N-1 adjustable fractions satisfying $\tau_{i-1} < \tau_i$, with $\tau_0=0$ and $\tau_N=1$ to simplify notation. Denote quantile function~\citep{muller1997integral} $F^{-1}_Z$ the inverse function of cumulative distribution function $F_Z(z)=Pr(Z<z)$. By definition we have
\begin{equation*}
   F_{Z}^{-1}(p) :=\inf \left\{z \in \mathbb{R} : p \leq F_{Z}(z)\right\}
\end{equation*}
where $p$ is what we refer to as quantile fraction.

Based on the distribution in Eq.(\ref{project_distribution}), denote $\Pi^{\theta, \tau}$ the projection operator that projects quantile function onto a staircase function supported by $\theta$ and $\tau$, the projected quantile function is given by
\begin{equation*}
    F^{-1,\theta,\tau}_{Z}(\omega)=\Pi^{\theta, \tau} F^{-1}_{Z}(\omega)=
    \theta_0+\sum_{i=0}^{N-1} (\theta_{i+1}-\theta_{i})H_{\tau_{i+1}}(\omega),
\end{equation*}
where $H$ is the Heaviside step function and $H_\tau(\omega)$ is the short for $H(\omega-\tau)$. Figure~\ref{fig:proj} gives an example of such projection. For each state-action pair $(x,a)$, we first generate the set of fractions $\tau$ using the fraction proposal network, and then obtain the quantiles values $\theta$ corresponding to $\tau$ using the quantile value network. 

To measure the distortion between approximated quantile function and the true quantile function, we use the $1$-Wasserstein metric given by
\begin{equation}
\label{projection_error}
    W_{1}(Z, \theta, \tau)=\sum_{i=0}^{N-1} \int_{\tau_{i}}^{\tau_{i+1}}\left|F^{-1}_{Z}(\omega)-\theta_{i}\right| d \omega.
\end{equation}
Unlike KL divergence used in C51 which considers only the probabilities of the outcomes, the $p$-Wasseretein metric takes both the probability and the distance between outcomes into consideration. 
Figure~\ref{fig:proj} illustrates the concept of how different approximations could affect $W_1$ error, and shows an example of $\Pi_{W_1}$. However, note that in practice Eq.(\ref{projection_error}) can not be obtained without bias.

\begin{figure}[!htb]
   \begin{minipage}{0.48\textwidth}
     \centering
     \includegraphics[width=1.0\linewidth]{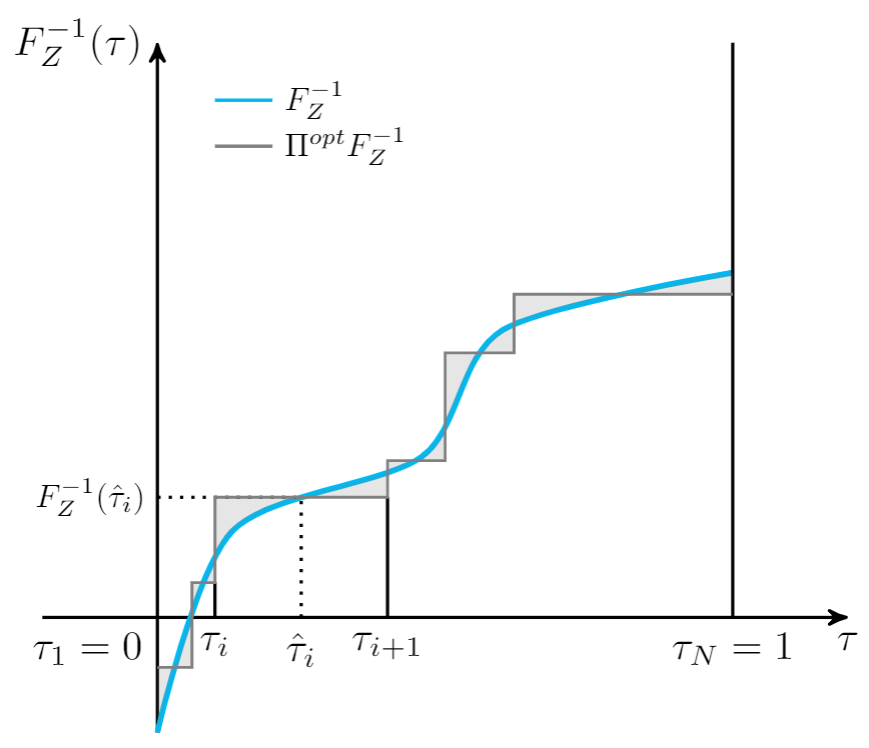}
     \caption*{(a)}
   \end{minipage}\hfill
   \begin{minipage}{0.48\textwidth}
     \centering
     \includegraphics[width=1.0\linewidth]{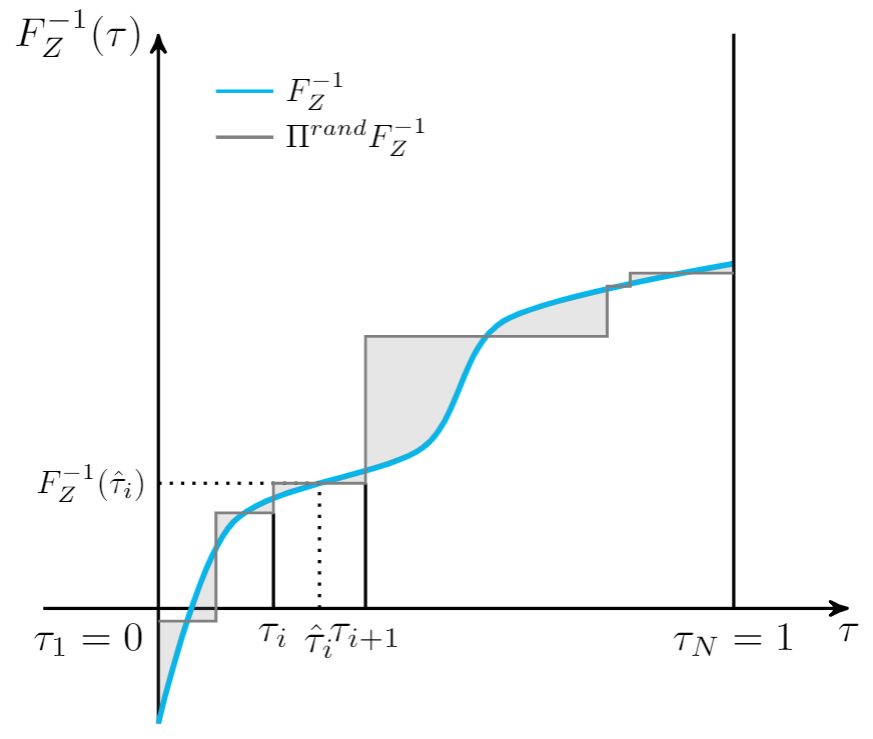}
     \caption*{(b)}
   \end{minipage}
   \caption{Two approximations of the same quantile function using different set of $\tau$ with $N=6$, the area of the shaded region is equal to the 1-Wasserstein error. (a) Finely-adjusted $\tau$ with minimized $W_1$ error. (b) Randomly chosen $\tau$ with larger $W_1$ error.}
   \label{fig:proj}
\end{figure}

\subsection{Training fraction proposal Network}
To achieve minimal $1$-Wasserstein error, we start from fixing $\tau$ and finding the optimal corresponding quantile values $\theta$. In QR-DQN, \cite{dabney2018implicit} gives an explicit form of $\theta$ to achieve the goal. We extend it to our setting:

\begin{lemma}
\label{lemma:1}
\citep{dabney2018implicit} For any $\tau_1,...\tau_{N-1} \in [0,1]$ satisfying $\tau_{i-1} < \tau_i$ for $i$, with $\tau_1=0$ and $\tau_N=1$, and cumulative distribution function $F$ with inverse $F^{-1}$, the set of $\theta$ minimizing Eq.(\ref{projection_error})
is given by 
\begin{equation}
\label{eq:1}
    \theta_{i}=F^{-1}_{Z}(\frac{\tau_i+\tau_{i+1}}{2})
\end{equation}
\end{lemma}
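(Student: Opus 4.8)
The plan is to exploit the fact that the objective in Eq.(\ref{projection_error}) decouples across the quantile fractions. Since $W_1(Z,\theta,\tau)=\sum_{i=0}^{N-1}\int_{\tau_i}^{\tau_{i+1}}|F^{-1}_Z(\omega)-\theta_i|\,d\omega$ and the $i$-th summand depends only on the single scalar $\theta_i$, minimizing the whole sum over $\theta=(\theta_0,\dots,\theta_{N-1})$ is equivalent to minimizing each term $g_i(\theta_i):=\int_{\tau_i}^{\tau_{i+1}}|F^{-1}_Z(\omega)-\theta_i|\,d\omega$ separately over $\theta_i\in\mathbb{R}$. So the lemma reduces to a one-dimensional claim: $g_i$ is minimized at $\theta_i=F^{-1}_Z\big(\tfrac{\tau_i+\tau_{i+1}}{2}\big)$.

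For the one-dimensional problem I would first observe that $g_i$ is convex in $\theta_i$, being an integral of the convex maps $\theta_i\mapsto|F^{-1}_Z(\omega)-\theta_i|$, so it suffices to exhibit a stationary point. Differentiating under the integral sign, for $\theta_i$ outside the Lebesgue-null set of values where $F^{-1}_Z(\omega)=\theta_i$ on the interval, $g_i'(\theta_i)=\int_{\tau_i}^{\tau_{i+1}}\mathrm{sign}(\theta_i-F^{-1}_Z(\omega))\,d\omega=\lambda\{\omega:F^{-1}_Z(\omega)<\theta_i\}-\lambda\{\omega:F^{-1}_Z(\omega)>\theta_i\}$, where $\lambda$ denotes Lebesgue measure restricted to $[\tau_i,\tau_{i+1}]$. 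Because $F^{-1}_Z$ is non-decreasing, $\{\omega\in[\tau_i,\tau_{i+1}]:F^{-1}_Z(\omega)<\theta_i\}$ is a left sub-interval and $\{\omega\in[\tau_i,\tau_{i+1}]:F^{-1}_Z(\omega)>\theta_i\}$ a right sub-interval; equating their lengths forces the common cut point to be the midpoint $m_i:=\tfrac{\tau_i+\tau_{i+1}}{2}$, and $\theta_i=F^{-1}_Z(m_i)$ realizes this. Put differently, $F^{-1}_Z(m_i)$ is a median, with respect to Lebesgue measure on $[\tau_i,\tau_{i+1}]$, of $\omega\mapsto F^{-1}_Z(\omega)$, and the median is the classical minimizer of an $L^1$ deviation; convexity then upgrades this stationary point to a global minimizer, which finishes the reduction and hence the lemma.

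I expect the only delicate point to be the bookkeeping when $F^{-1}_Z$ has flat pieces or jumps on $[\tau_i,\tau_{i+1}]$ — i.e. when $Z$ has gaps in its support or atoms — since then $g_i$ is merely piecewise linear and $F^{-1}_Z(m_i)$ may be just one of several minimizers (this is also why the statement should be read as describing \emph{a} minimizing set rather than the unique one). To keep things rigorous I would phrase optimality through the subdifferential, $0\in\partial g_i(\theta_i)$, which amounts to $\lambda\{\omega:F^{-1}_Z(\omega)<\theta_i\}\le \tfrac12(\tau_{i+1}-\tau_i)\le\lambda\{\omega:F^{-1}_Z(\omega)\le\theta_i\}$, and verify directly that $\theta_i=F^{-1}_Z(m_i)$ satisfies it using only monotonicity of $F^{-1}_Z$ together with the one-sided continuity built into the $\inf$ in its definition. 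A calculus-free alternative I would keep in reserve is to bound $g_i(\theta_i)-g_i(F^{-1}_Z(m_i))$ from below by splitting $\int_{\tau_i}^{\tau_{i+1}}$ at $m_i$ and using $F^{-1}_Z(\omega)\le F^{-1}_Z(m_i)$ for $\omega<m_i$ and $F^{-1}_Z(\omega)\ge F^{-1}_Z(m_i)$ for $\omega>m_i$ to show the integrand difference is nonnegative on each piece, giving the same conclusion without differentiating under the integral.
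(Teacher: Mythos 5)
Your argument is correct. Note first that the paper itself gives no proof of Lemma~\ref{lemma:1}: it is imported from the QR-DQN line of work (the citation should really point to Dabney et al.'s quantile-regression paper, whose Lemma~2 this is), and only Proposition~\ref{proposition:1} is proved in the appendix. Your route — decouple the sum term by term, observe each $g_i(\theta_i)=\int_{\tau_i}^{\tau_{i+1}}\lvert F^{-1}_Z(\omega)-\theta_i\rvert\,d\omega$ is a one-dimensional convex $L^1$ problem, and identify the minimizer as a median of $F^{-1}_Z$ under Lebesgue measure on $[\tau_i,\tau_{i+1}]$, which by monotonicity of $F^{-1}_Z$ is attained at $F^{-1}_Z\bigl(\tfrac{\tau_i+\tau_{i+1}}{2}\bigr)$ — is exactly the standard argument behind the cited result, and your subdifferential criterion $\lambda\{\omega:F^{-1}_Z(\omega)<\theta_i\}\le\tfrac12(\tau_{i+1}-\tau_i)\le\lambda\{\omega:F^{-1}_Z(\omega)\le\theta_i\}$ is verified correctly at the midpoint. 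You are also right to flag non-uniqueness when $F^{-1}_Z$ jumps at the midpoint (then any value in the jump interval minimizes $g_i$), a caveat the lemma statement glosses over by saying ``the set of $\theta$ minimizing'' rather than ``the unique minimizer.'' The only cosmetic slip is the phrase ``equating their lengths forces the common cut point to be the midpoint'': when $F^{-1}_Z$ is flat at level $\theta_i$ the two sets $\{F^{-1}_Z<\theta_i\}$ and $\{F^{-1}_Z>\theta_i\}$ do not share a cut point, but your subdifferential formulation already repairs this, so nothing substantive is missing.
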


We can now substitute $\theta_i$ in Eq.(\ref{projection_error}) with equation Eq.(\ref{eq:1}) and find the optimal condition for $\tau$ to minimize $W_1(Z,\tau)$. For simplicity, we denote $\hat{\tau}_i=\frac{\tau_i+\tau_{i+1}}{2}$.

\begin{restatable}{proposition}{prop}
\label{proposition:1}
For any continuous quantile function $F^{-1}_{Z}$ that is non-decreasing, define the 1-Wasserstein loss of $F^{-1}_{Z}$ and $F^{-1,\tau}_{Z}$ by
\begin{equation}
\label{objective}
    W_{1}(Z, \tau)=\sum_{i=0}^{N-1} \int_{\tau_{i}}^{\tau_{i+1}}\left|F^{-1}_{Z}(\omega)-F^{-1}_{Z}(\hat{\tau}_i)\right| d \omega.
\end{equation}
$\frac{\partial W_{1}}{\partial \tau_i}$ is given by 
\begin{equation}
\label{eq:2}
    \frac{\partial W_{1}}{\partial \tau_i}=2F^{-1}_{Z}(\tau_{i})-F^{-1}_{Z}(\hat{\tau}_i)-F^{-1}_{Z}(\hat{\tau}_{i-1}), 
\end{equation}
$\forall i\in (0, N)$. 

Further more, $\forall \tau_{i-1},\; \tau_{i+1}\in[0,1],\; \tau_{i-1}<\tau_{i+1},\; \exists \tau_i \in (\tau_{i-1}, \tau_{i+1})\; s.t.\; \frac{\partial W_{1}}{\partial \tau_i}=0$.
\end{restatable}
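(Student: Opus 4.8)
The plan is to establish the two claims in turn, starting from the derivative formula. First I would note that $\tau_i$ enters $W_1(Z,\tau)$ in Eq.~(\ref{objective}) only through the summands $j=i-1$ and $j=i$, via the integration limits and via $\hat\tau_{i-1}=\frac{\tau_{i-1}+\tau_i}{2}$ and $\hat\tau_i=\frac{\tau_i+\tau_{i+1}}{2}$. Since $F^{-1}_{Z}$ is non-decreasing and $\hat\tau_j\in(\tau_j,\tau_{j+1})$, on $[\tau_j,\hat\tau_j]$ we have $F^{-1}_{Z}(\omega)\le F^{-1}_{Z}(\hat\tau_j)$ and on $[\hat\tau_j,\tau_{j+1}]$ the opposite inequality, so each absolute value in Eq.~(\ref{objective}) can be removed at the cost of splitting the integral at $\hat\tau_j$. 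Introducing $\Phi(\omega):=\int_0^\omega F^{-1}_{Z}(s)\,ds$ — which is $C^1$ with $\Phi'=F^{-1}_{Z}$ precisely because $F^{-1}_{Z}$ is continuous, even though $F^{-1}_{Z}$ need not be differentiable — a direct computation shows the $j$-th summand equals $\Phi(\tau_j)+\Phi(\tau_{j+1})-2\Phi(\hat\tau_j)+F^{-1}_{Z}(\hat\tau_j)\,(2\hat\tau_j-\tau_j-\tau_{j+1})$, and the last term vanishes identically since $2\hat\tau_j=\tau_j+\tau_{j+1}$. Hence the part of $W_1$ depending on $\tau_i$ is $2\Phi(\tau_i)-2\Phi(\hat\tau_{i-1})-2\Phi(\hat\tau_i)$ up to an additive constant, and differentiating with the chain rule using $\frac{\partial\hat\tau_{i-1}}{\partial\tau_i}=\frac{\partial\hat\tau_i}{\partial\tau_i}=\frac12$ gives exactly Eq.~(\ref{eq:2}).

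For the existence statement, I would fix $\tau_{i-1}<\tau_{i+1}$ and regard the right-hand side of Eq.~(\ref{eq:2}) as a function $h(\tau_i):=2F^{-1}_{Z}(\tau_i)-F^{-1}_{Z}(\hat\tau_i)-F^{-1}_{Z}(\hat\tau_{i-1})$ of $\tau_i\in[\tau_{i-1},\tau_{i+1}]$, which is continuous because $F^{-1}_{Z}$ is. At the endpoints one of the two arguments $\hat\tau_{i-1},\hat\tau_i$ collapses onto $\tau_i$, so $h(\tau_{i-1})=F^{-1}_{Z}(\tau_{i-1})-F^{-1}_{Z}\!\big(\tfrac{\tau_{i-1}+\tau_{i+1}}{2}\big)\le 0$ and $h(\tau_{i+1})=F^{-1}_{Z}(\tau_{i+1})-F^{-1}_{Z}\!\big(\tfrac{\tau_{i-1}+\tau_{i+1}}{2}\big)\ge 0$, both by monotonicity of $F^{-1}_{Z}$. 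The Intermediate Value Theorem then produces a zero of $h$ somewhere in $[\tau_{i-1},\tau_{i+1}]$.

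The main obstacle is promoting this to a zero in the \emph{open} interval $(\tau_{i-1},\tau_{i+1})$. If the IVT zero is already interior we are done, so the only issue is when $h$ vanishes solely at an endpoint, which forces $h(\tau_{i-1})=0$ or $h(\tau_{i+1})=0$; by the formulas above this means $F^{-1}_{Z}$ is constant on $[\tau_{i-1},\tfrac{\tau_{i-1}+\tau_{i+1}}{2}]$ or on $[\tfrac{\tau_{i-1}+\tau_{i+1}}{2},\tau_{i+1}]$. I would dispose of this by a brief case split: if $F^{-1}_{Z}$ is constant on the whole of $[\tau_{i-1},\tau_{i+1}]$ then $h\equiv 0$ there and any interior point qualifies; otherwise, the local constancy forces $h$ to keep one sign on a sub-interval abutting the flat part while it has the strictly opposite sign at the far endpoint (using continuity of $F^{-1}_{Z}$ and that it is genuinely non-constant away from the flat part), so a second application of the Intermediate Value Theorem on that strictly smaller interval yields a zero lying strictly inside $(\tau_{i-1},\tau_{i+1})$. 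An alternative packaging of the same idea is to observe that $W_1$ is a $C^1$ function of $\tau_i$ on the compact interval, hence attains its minimum, and to rule out the minimizer being an endpoint from the sign of $h$ there before applying Fermat's rule — the same degenerate configurations arise and are handled identically.
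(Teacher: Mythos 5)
Your proof is correct and follows essentially the same route as the paper's: split each absolute value at $\hat\tau_j$ using monotonicity of $F^{-1}_Z$, differentiate the two summands that involve $\tau_i$, and locate an interior zero of the resulting expression via the Intermediate Value Theorem from its signs at $\tau_i=\tau_{i-1}$ and $\tau_i=\tau_{i+1}$. Your write-up is in fact slightly more careful in two places: working through the antiderivative $\Phi$ avoids the paper's formal terms $\frac{\partial}{\partial \tau_i}F^{-1}_Z(\hat\tau_{i-1})$ (which presuppose a differentiability of $F^{-1}_Z$ that is not assumed, though they cancel anyway), and your endpoint case split handles the closed-versus-open-interval degeneracy that the paper's one-line IVT appeal glosses over.
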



Proof of proposition~\ref{proposition:1} is given in the appendix. While computing $W_1$ without bias is usually impractical, equation \ref{eq:2} provides us with a way to minimize $W_1$ without computing it. Let $w_1$ be the parameters of the fraction proposal network $P$, for an arbitrary quantile function $F_Z^{-1}$, we can minimize $W_1$ by iteratively applying gradients descent to $w_1$ according to Eq.(\ref{eq:2}) and convergence is guaranteed. As the true quantile function $F^{-1}_{Z}$ is unknown to us in practice, we use the quantile value network $F^{-1}_{Z,w_2}$ with parameters $w_2$ for current state and action as true quantile function. 

The expected return, also known as action-value based on FQF is then given by
\begin{equation*}
    Q(x, a)=\sum_{i=0}^{N-1}(\tau_{i+1}-\tau_{i})F^{-1}_{Z,w_2}(\hat{\tau}_i),
\end{equation*}
where $\tau_0=0$ and $\tau_{N}=1$.

\subsection{Training quantile value network}
With the properly chosen probabilities, we combine quantile regression and distributional Bellman update on the optimized probabilities to train the quantile function. Consider $Z$ a random variable denoting the action-value at $(x_t,a_t)$ and $Z'$ the action-value random variable at $(x_{t+1},a_{t+1})$, the weighted temporal difference (TD) error for two probabilities $\hat{\tau}_i$ and $\hat{\tau}_j$ is defined by
\begin{equation}
    \delta_{ij}^t=r_t+\gamma F^{-1}_{Z',w_1}(\hat{\tau}_i)-F^{-1}_{Z,w_1}(\hat{\tau}_j)
\end{equation}

Quantile regression is used in QR-DQN and IQN to stochastically adjust the quantile estimates so as to minimize the Wasserstein distance to a target distribution. We follow QR-DQN and IQN where quantile value networks are trained by minimizing the Huber quantile regression loss~\citep{huber:1964}, with threshold $\kappa$,
\begin{equation*}
    \rho_{\tau}^{\kappa}\left(\delta_{i j}\right)=\left|\tau-\mathbb{I}\left\{\delta_{i j}<0\right\}\right| \frac{\mathcal{L}_{\kappa}\left(\delta_{i j}\right)}{\kappa} \text{, with}
\end{equation*}
\begin{equation*}
    \mathcal{L}_{\kappa}\left(\delta_{i j}\right)=\left\{\begin{array}{ll}{\frac{1}{2} \delta_{i j}^{2},} & {\text { if }\left|\delta_{i j}\right| \leq \kappa} \\ {\kappa\left(\left|\delta_{i j}\right|-\frac{1}{2} \kappa\right),} & {\text { otherwise }}\end{array}\right.
\end{equation*}

The loss of the quantile value network is then given by
\begin{equation}
    \mathcal{L}(x_t, a_t, r_t, x_{t+1})=\frac{1}{N}\sum_{i=0}^{N-1}\sum_{j=0}^{N-1}\rho_{\hat{\tau}_j}^\kappa(\delta_{ij}^{t})
\end{equation}
Note that $F^{-1}_{Z}$ and its Bellman target share the same proposed quantile fractions $\hat{\tau}$ to reduce computation.

We perform joint gradient update for $w_1$ and $w_2$, as illustrated in Algorithm \ref{algo:1}.

\begin{algorithm}[H]
\SetAlgoLined
\label{algo:1}
\KwParameter{$N,\kappa$}
\KwInput{$x,a,r,x', \gamma \in[0,1)$}
 \tcp{Compute proposed fractions for $x,a$}
 $\tau\leftarrow P_{w_1}(x,a)$\;
  \tcp{Compute proposed fractions for $x',a'$}
 \For {$a'\in \mathcal{A}$}{
  $\tau^{a'}\leftarrow P_{w_1}(x',a')$\; 
 }
 \tcp{Compute greedy action}
 $Q(s',a')\leftarrow \sum_{i=0}^{N-1}(\tau^{a'}_{i+1}-\tau^{a'}_{i})F^{-1}_{Z',w_2}(\hat{\tau}_i^{a'})$\;
 $a^*\leftarrow \underset{a'}{\mathrm{argmax}} Q(s',a')$\; 
 \tcp{Compute $L$}
 \For {$0\leq i \leq N-1$}{
    \For {$0\leq j \leq N-1$}{
        $\delta_{ij}\leftarrow r+\gamma F^{-1}_{Z',w_2}(\hat{\tau}_i)-F^{-1}_{Z,w_2}(\hat{\tau}_j)$
    }
 }
 $\mathcal{L}=\frac{1}{N}\sum_{i=0}^{N-1}\sum_{j=0}^{N-1}\rho_{\hat{\tau}_j}^\kappa(\delta_{ij})$\;
 \tcp{Compute $\frac{\partial W_1}{\partial \tau_i}$ for $i\in [1, N-1]$}
 $\frac{\partial W_1}{\partial \tau_i}=2F^{-1}_{Z,w_2}(\tau_{i})-F^{-1}_{Z,w_2}(\hat{\tau}_i)-F^{-1}_{Z,w_2}(\hat{\tau}_{i-1})$\;
 Update $w_1$ with $\frac{\partial W_1}{\partial \tau_i}$; Update $w_2$ with $\nabla\mathcal{L}$\;
 \KwOutput{$Q$}
\caption{FQF update}
\end{algorithm}

\subsection{Implementation Details}
Our fraction proposal network is represented by one fully-connected MLP layer. It takes the state embedding of original IQN as input and generates fraction proposal. Recall that in Proposition~\ref{lemma:1}, we require $\tau_{i-1}<\tau_i$ and $\tau_0=0,\tau_N=1$. While it is feasible to have $\tau_0=0,\tau_{N}=1$ fixed and sort the output of $\tau_{w_1}$, the sort operation would make the network hard to train. A more reasonable and practical way would be to let the neural network automatically have the output sorted using cumulated softmax. Let $q\in\mathbb{R}^{N}$ denote the output of a softmax layer, we have $q_i\in(0,1), i\in[0,N-1]$ and $\sum_{i=0}^{N-1}q_i=1$. Let $\tau_i=\sum_{j=0}^{i-1}q_j, i\in[0,N]$, then straightforwardly we have $\tau_i<\tau_j$ for $\forall i<j$ and $\tau_0=0,\tau_{N}=1$ in our fraction proposal network. Note that as $W_1$ is not computed, we can't directly perform gradient descent for the fraction proposal network. Instead, we use the \verb|grad_ys| argument in the tensorflow operator \verb|tf.gradients| to assign $\frac{\partial W_1}{\partial \tau_i}$ to the optimizer. In addition, one can use entropy of $q$ as a regularization term $H(q)=-\sum_{i=0}^{N-1} q_i\log q_i$ to prevent the distribution from degenerating into a deterministic one.

We borrow the idea of implicit representations from IQN to our quantile value network. To be specific, we compute the embedding of $\tau$, denoted by $\phi(\tau)$, with 
\begin{equation*}
    \phi_{j}(\tau) :=\operatorname{ReLU}\left(\sum_{i=0}^{n-1} \cos (i\pi \tau) w_{i j}+b_{j}\right),
\end{equation*}
where $w_{ij}$ and $b_j$ are network parameters. We then compute the element-wise (Hadamard) product of state feature $\psi(x)$ and embedding $\phi(\tau)$. Let $\odot$ denote element-wise product, the quantile values are given by $F^{-1}_Z(\tau)\approx F^{-1}_{Z,w_2}(\psi(x)\odot \phi(\tau))$.

In IQN, after the set of $\tau$ is sampled from a uniform distribution, instead of using differences between $\tau$ as probabilities of the quantiles, the mean of the quantile values is used to compute action-value $Q$. While in expectation, $Q=\sum_{i=0}^{N-1}(\tau_{i+1}-\tau_{i})F^{-1}_{Z}(\frac{\tau_i+\tau_{i+1}}{2})$ with $\tau_0=0, \tau_{N}=1$ and $Q=\frac{1}{N}\sum_{i=1}^{N}F^{-1}_{Z}(\tau_i)$ are equal, we use the former one to consist with our projection operation.

\section{Experiments}

\begin{figure*}[t!]
    \centering
    \includegraphics[width=0.3\textwidth]{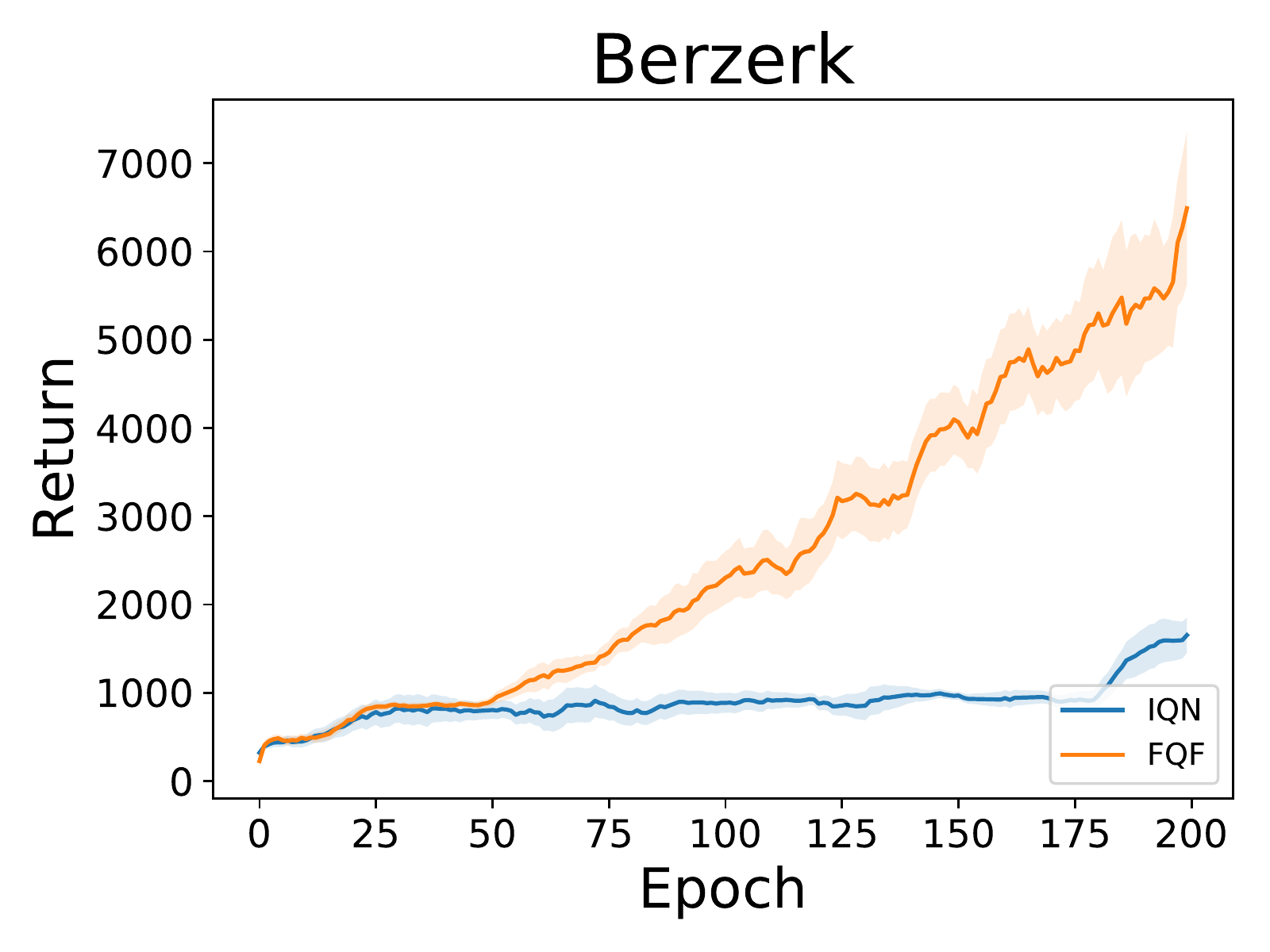}
    \includegraphics[width=0.3\textwidth]{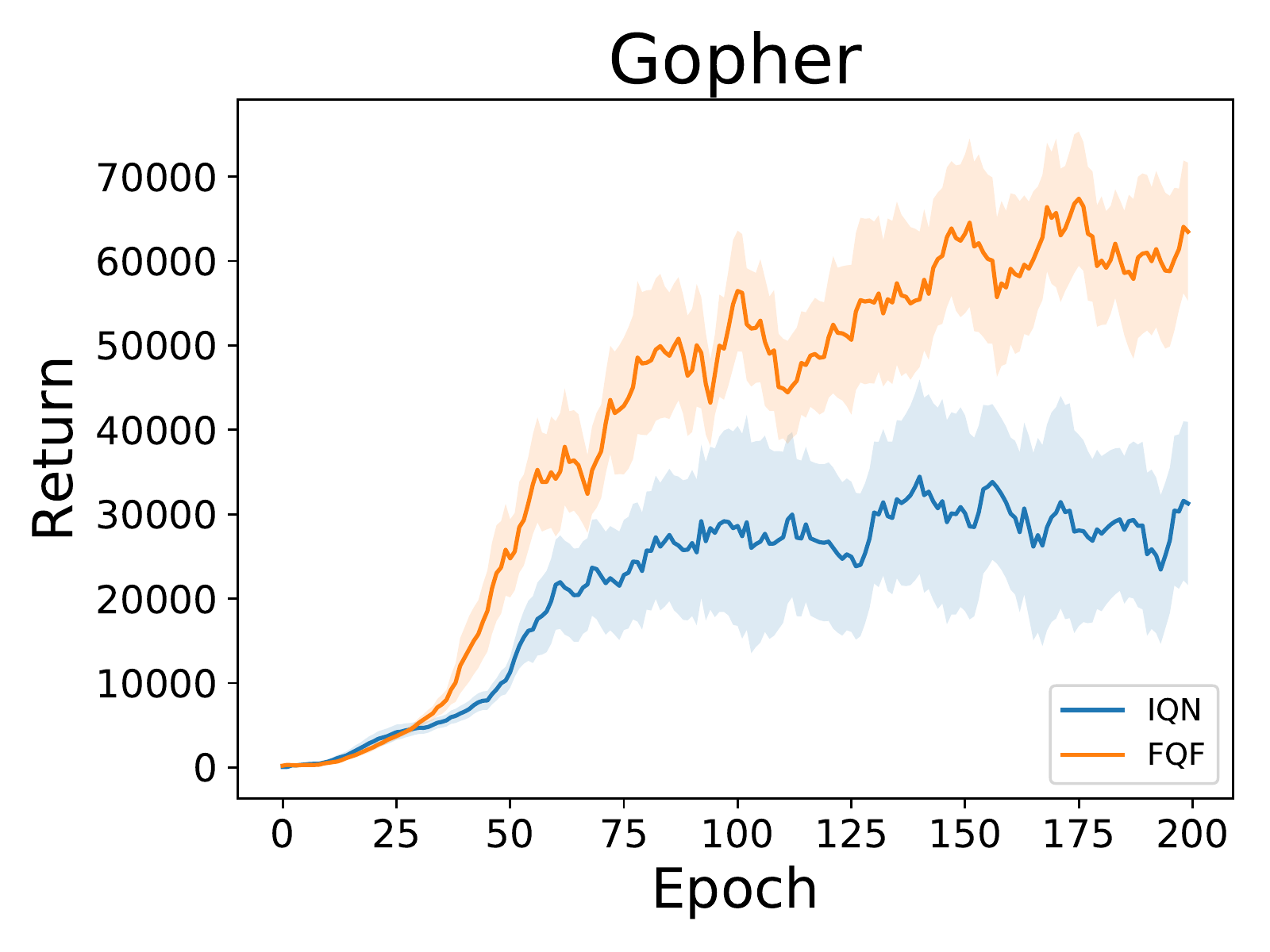}
    \includegraphics[width=0.3\textwidth]{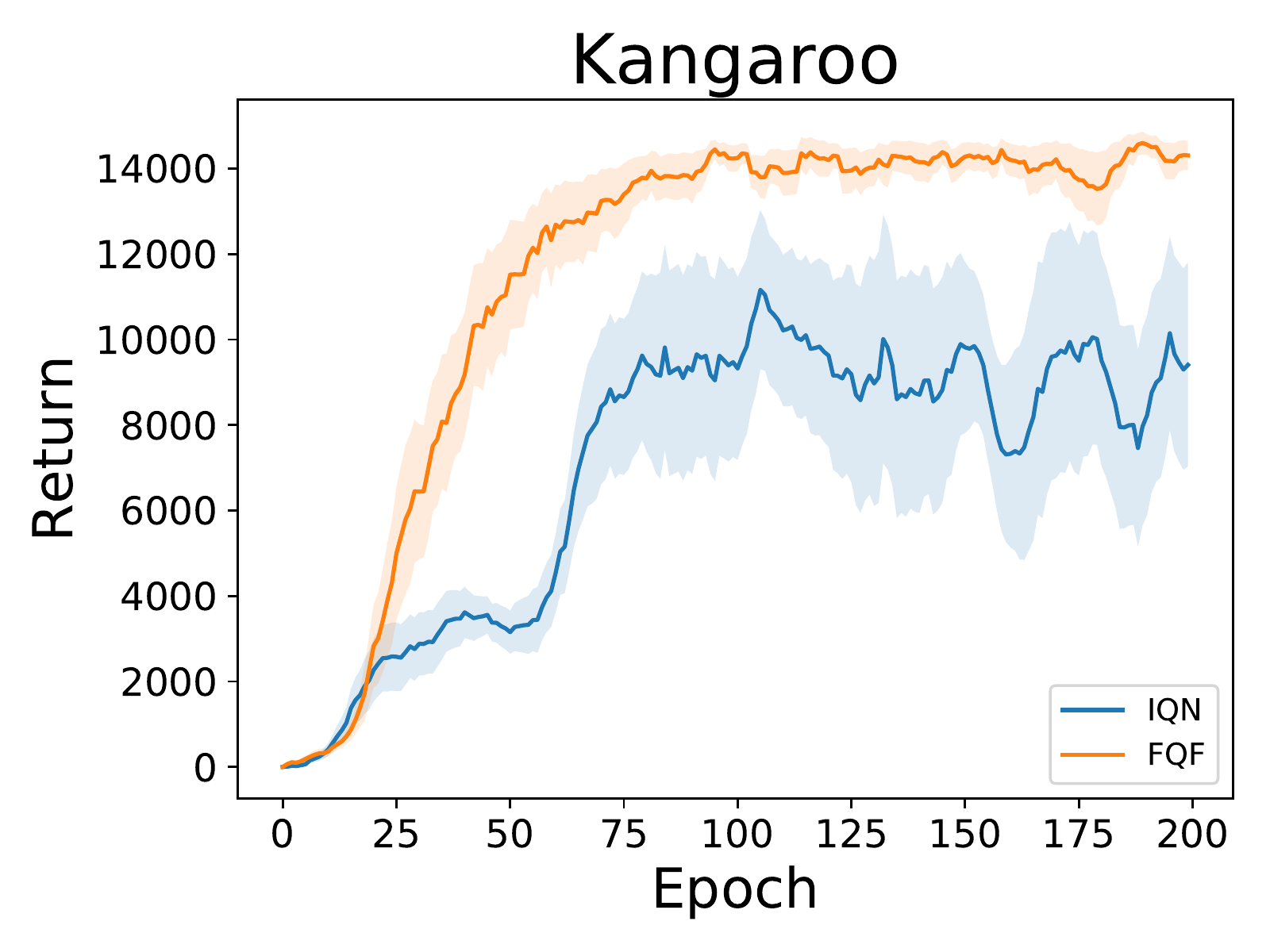}
    \includegraphics[width=0.3\textwidth]{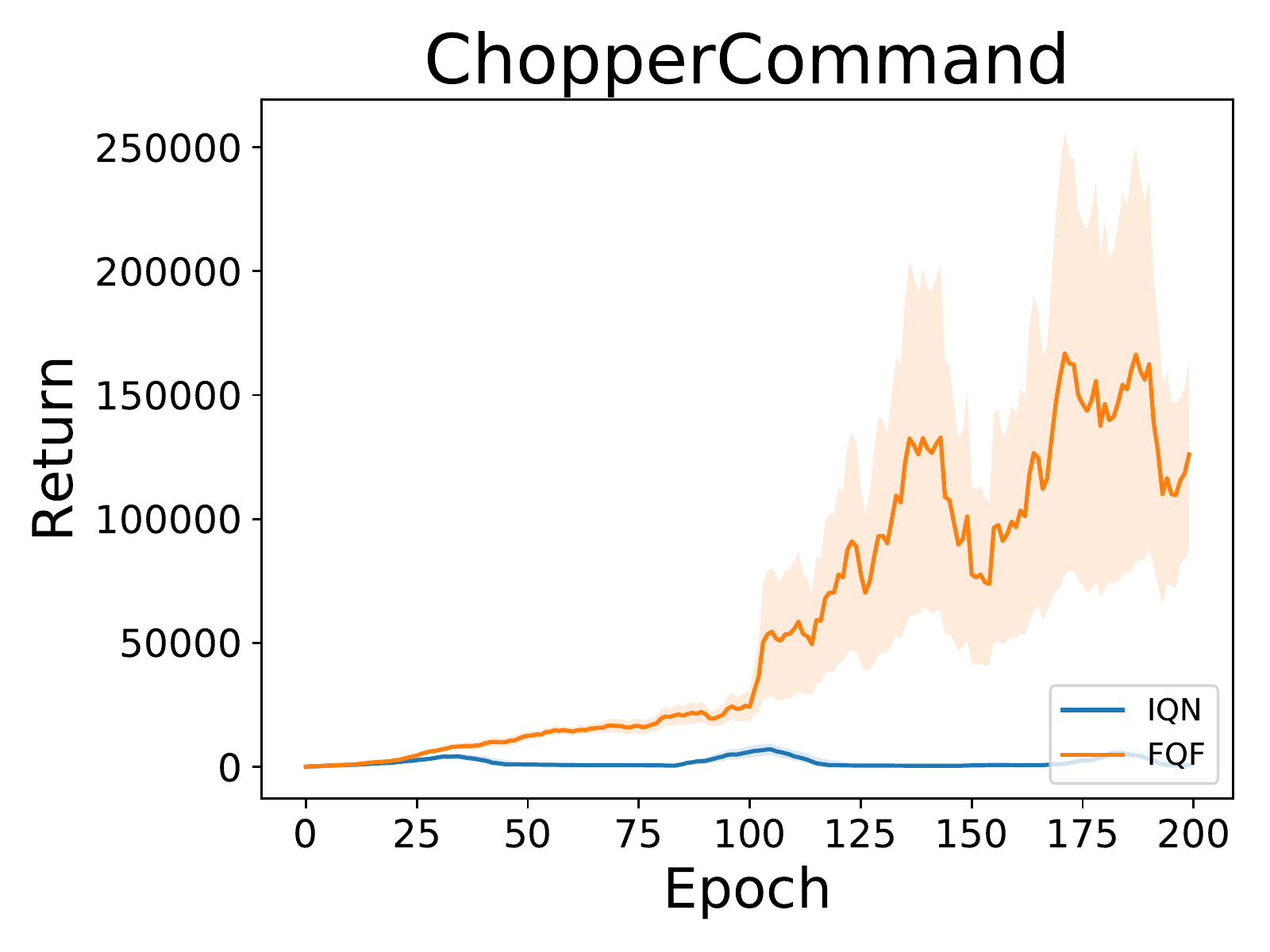}
    \includegraphics[width=0.3\textwidth]{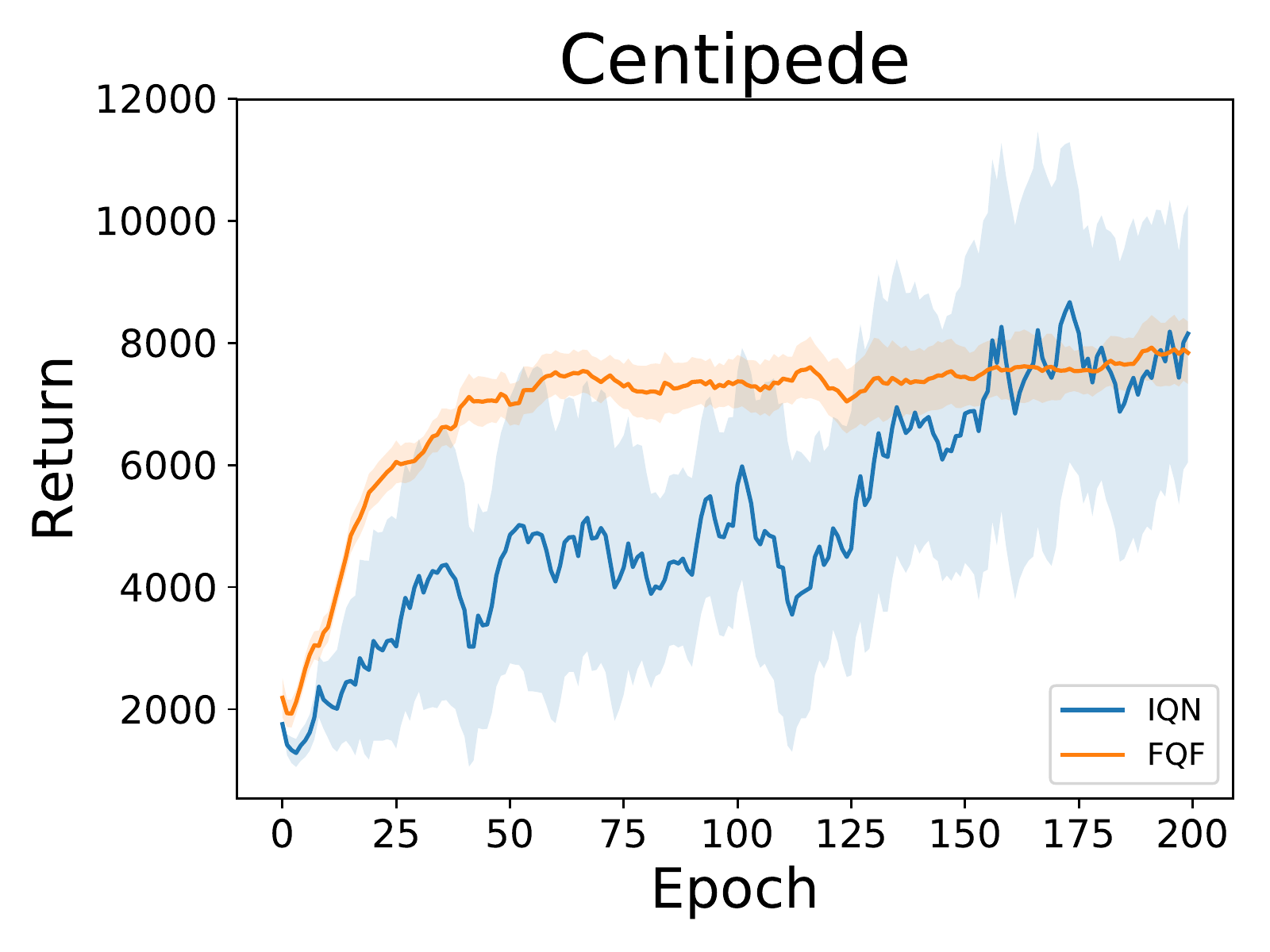}
    \includegraphics[width=0.3\textwidth]{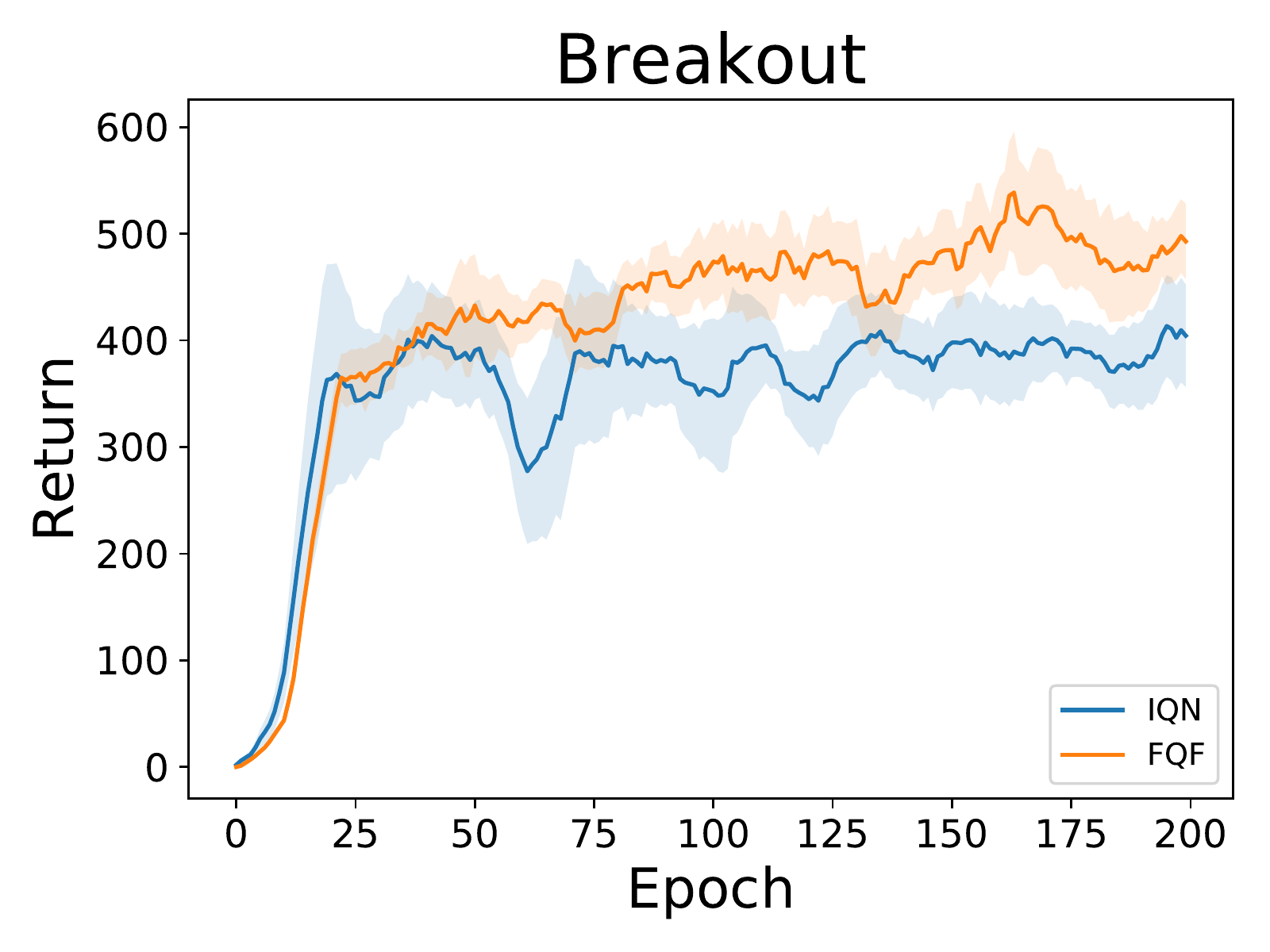}
    \includegraphics[width=0.3\textwidth]{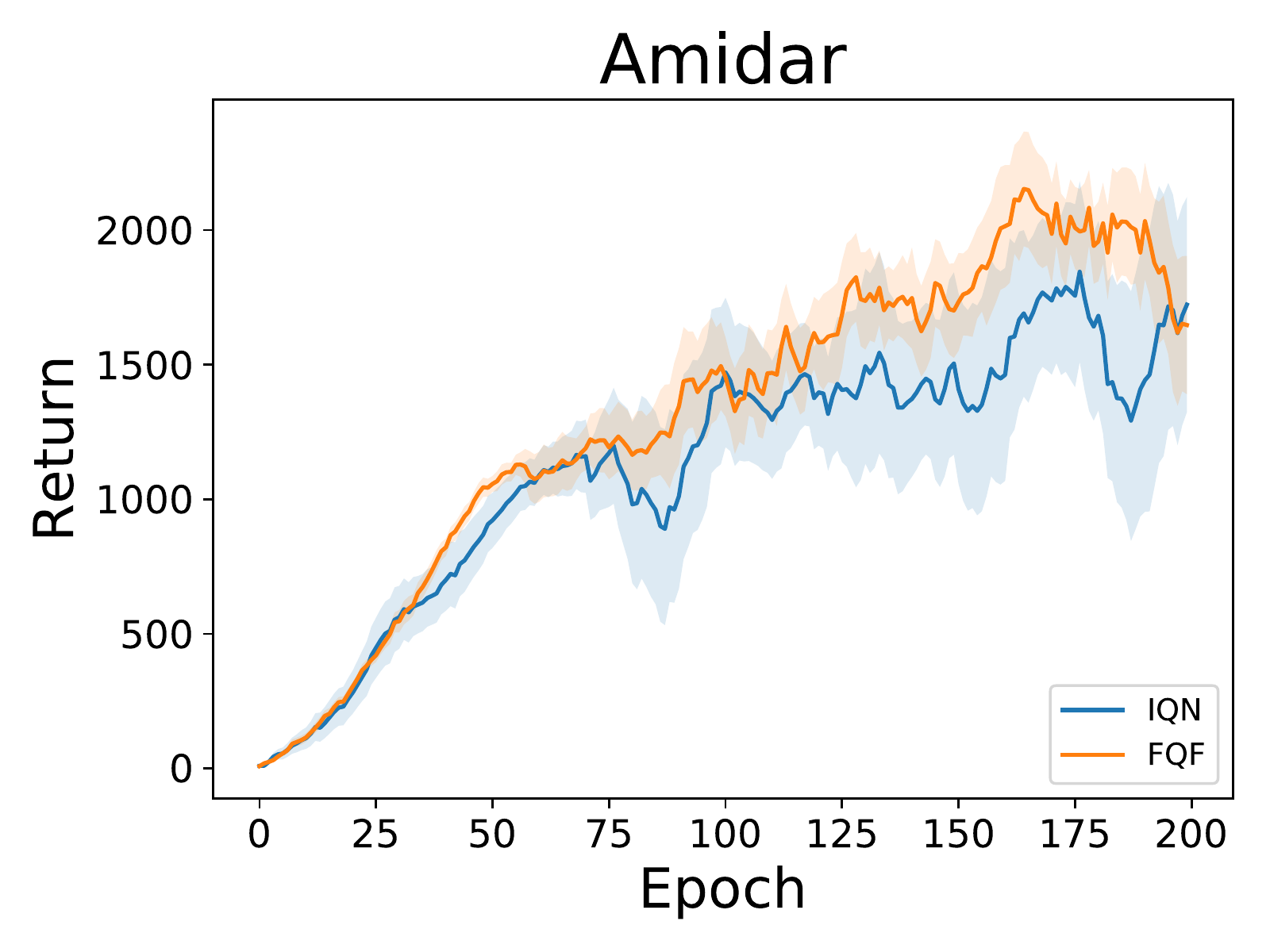}
    \includegraphics[width=0.3\textwidth]{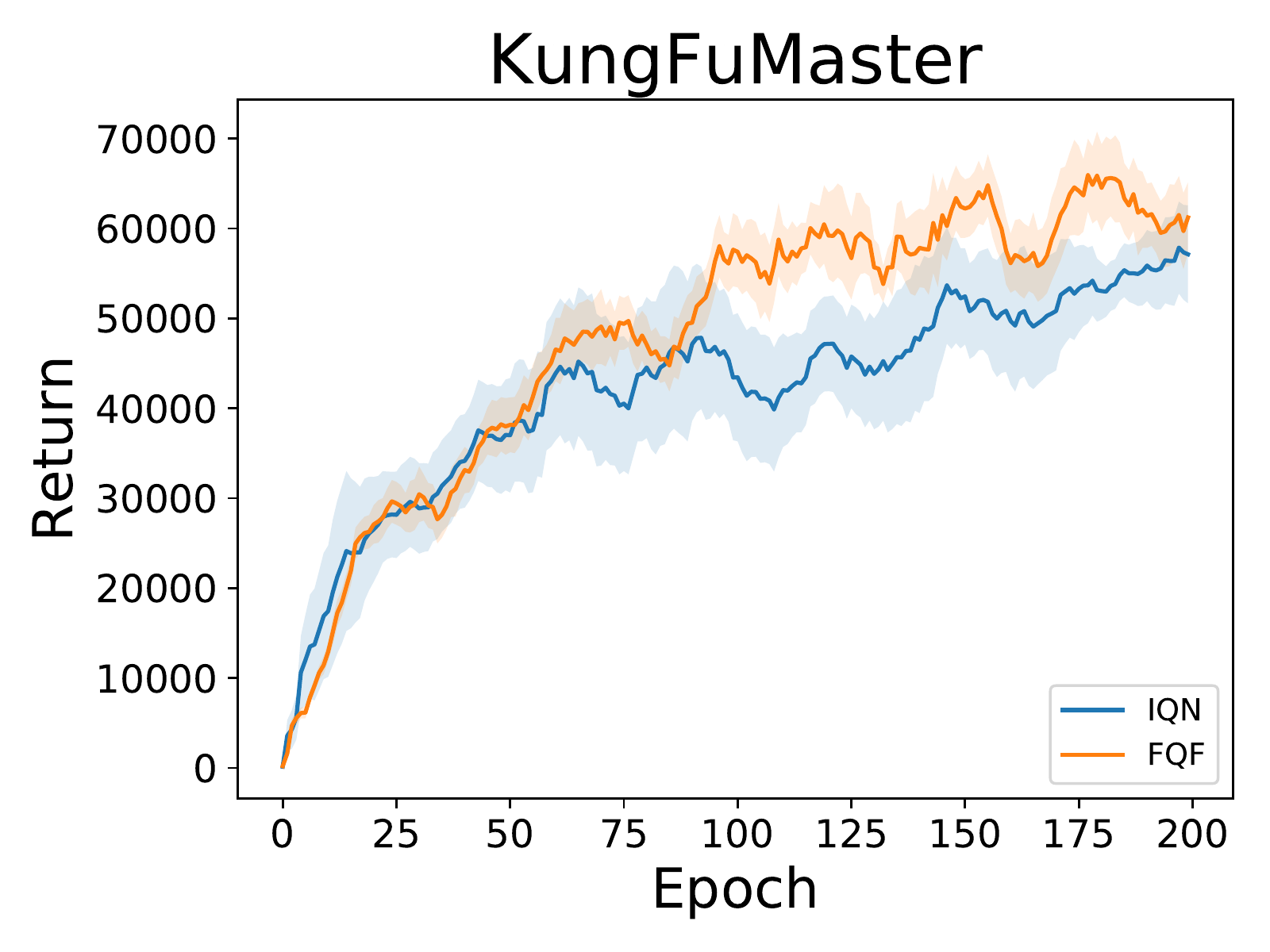}
    \includegraphics[width=0.3\textwidth]{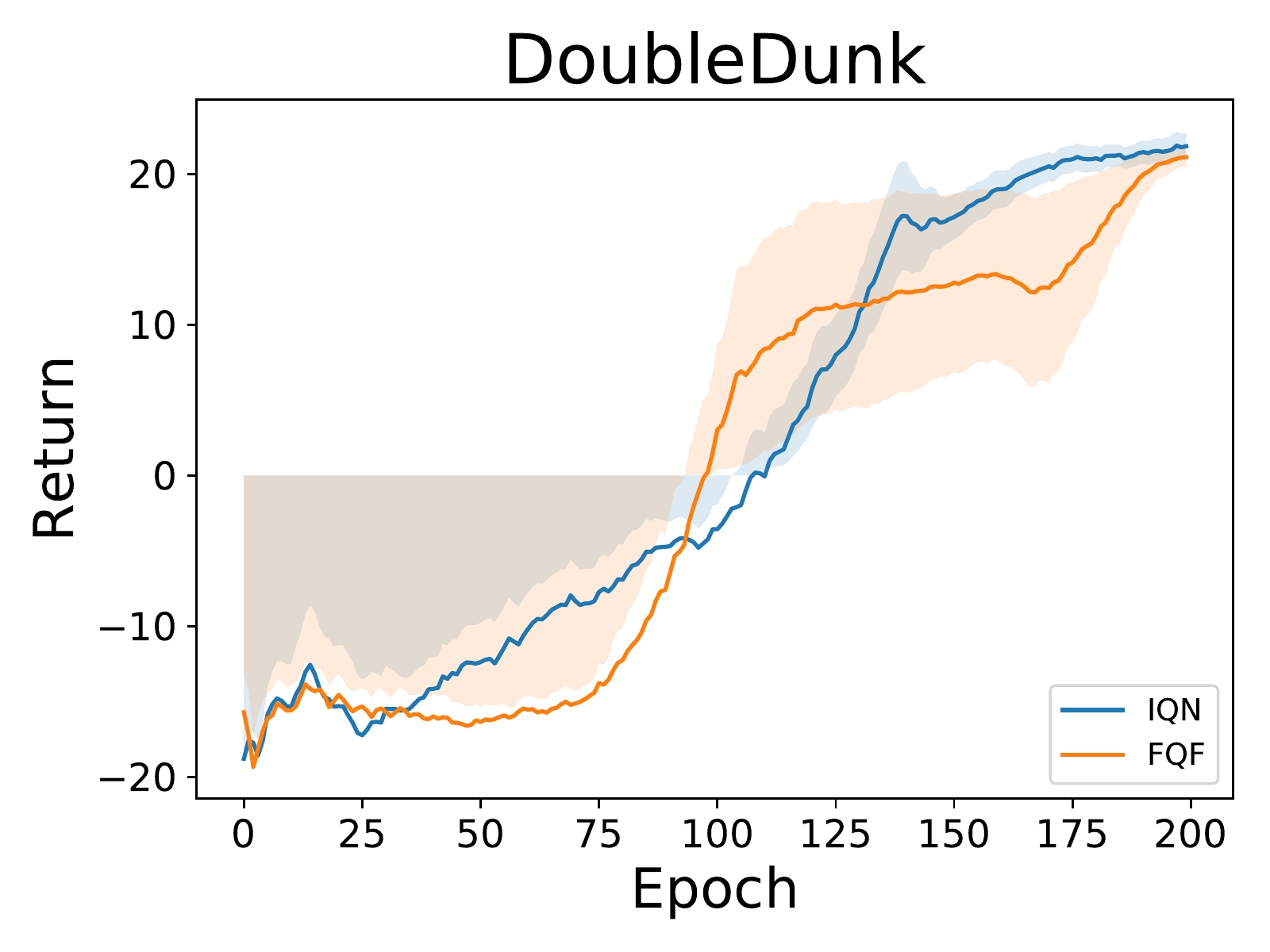}

    \caption{Performance comparison with IQN. Each training curve is averaged by 3 seeds. The training curves are smoothed with a moving average of 10 to improve readability.}
    \label{curve}
\end{figure*}

We test our algorithm on the Atari games from Arcade Learning Environment (ALE)~\cite{bellemare2013arcade}. We select the most relative algorithm to ours, IQN~\citep{dabney2018implicit}, as baseline, and compare FQF with QR-DQN~\citep{dabney2018distributional}, C51~\citep{bellemare2017distributional}, prioritized experience replay~\citep{schaul2016prioritized} and Rainbow~\citep{hessel2018rainbow}, the current state-of-art that combines the advantages of several RL algorithms including distributional RL. The baseline algorithm is implemented by \cite{castro18dopamine} in the Dopamine framework, with slightly lower performance than reported in IQN. We implement FQF based on the Dopamine framework. Unfortunately, we fail to test our algorithm on \textit{Surround} and \textit{Defender} as \textit{Surround} is not supported by the Dopamine framework and scores of \textit{Defender} is unreliable in Dopamine. Following the common practice~\citep{van2016deep}, we use the 30-noop evaluation settings to align with previous works. Results of FQF and IQN using sticky action for evaluation proposed by \cite{machado2018revisiting} are also provided in the appendix. In all, the algorithms are tested on 55 Atari games.

Our hyper-parameter setting is aligned with IQN for fair comparison. The number of $\tau$ for FQF is 32. The weights of the fraction proposal network are initialized so that initial probabilities are uniform as in QR-DQN, also the learning rates are relatively small compared with the quantile value network to keep the probabilities relatively stable while training. We run all agents with 200 million frames. At the training stage, we use $\epsilon$-greedy with $\epsilon=0.01$. For each evaluation stage, we test the agent for 0.125 million frames with $\epsilon=0.001$. For each algorithm we run 3 random seeds. All experiments are performed on NVIDIA Tesla V100 16GB graphics cards. 

\begin{table*}[ht!]
	\centering
    \begin{tabular}{llllllll}
		\hline \hline
		& Mean & Median & >Human & >DQN \\
		\midrule
		DQN & 221\% & 79\% & 24 & 0 \\
		PRIOR. & 580\% & 124\% & 39 & 48 \\
		C51 & 701\% & 178\% & 40 & 50 \\
		RAINBOW & 1213\% & 227\% & 42 & 52\\
		QR-DQN & 902\% & 193\% & 41 & 54\\
		IQN & 1112\% & 218\%  & 39 & 54 \\
		\bottomrule
		FQF & \textbf{1426}\% & \textbf{272}\%  & \textbf{44} & \textbf{54}
	\end{tabular}
\caption{Mean and median scores across 55 Atari 2600 games, measured as percentages of human baseline. Scores are averages over 3 seeds.} \label{scores-table}
\end{table*}

Table~\ref{scores-table} compares the mean and median human normalized scores across 55 Atari games with up to 30 random no-op starts, and the full score table is provided in the Appendix. It shows that FQF outperforms all existing distributional RL algorithms, including Rainbow~\citep{hessel2018rainbow} that combines C51 with prioritized replay, and n-step updates. We also set a new record on the number of games where non-distributed RL agent performs better than human. 

Figure~\ref{curve} shows the training curves of several Atari games. Even on games where FQF and IQN have similar performance such as \textit{Centipede} , FQF is generally much faster thanks to self-adjusting fractions. 

However, one side effect of the full parameterization in FQF is that the training speed is decreased. With same settings, FQF is roughly 20\% slower than IQN due to the additional fraction proposal network. As the number of $\tau$ increases, FQF slows down significantly while IQN's training speed is not sensitive to the number of $\tau$ samples.

\section{Discussion and Conclusions}
Based on previous works of distributional RL, we propose a more general complete approximation of the return distribution. Compared with previous distributional RL algorithms, FQF focuses not only on learning the target, e.g. probabilities for C51, quantile values for QR-DQN and IQN, but also which target to learn, i.e quantile fraction. This allows FQF to learn a better approximation of the true distribution under restrictions of network capacity. Experiment result shows that FQF does achieve significant improvement.

There are some open questions we are yet unable to address in this paper. We will have some discussions here. First, does the $1$-Wasserstein error converge to its minimal value when the quantile function is not fixed? We cannot guarantee convergence of the fraction proposal network in deep neural networks where we involve quantile regression and Bellman update. Second, though we empirically believe so, does the contraction mapping result for fixed probabilities given by \cite{dabney2018distributional} also apply on self-adjusting probabilities? Third, while FQF does provide potentially better distribution approximation with same amount of fractions, how will a better approximated distribution affect agent's policy and how will it affect the training process? More generally, how important is quantile fraction selection during training?

As for future work, we believe that studying the trained quantile fractions will provide intriguing results. Such as how sensitive are the quantile fractions to state and action, and that how the quantile fractions will evolve in a single run. Also, the combination of distributional RL and DDPG in D4PG~\citep{barth2018distributed} showed that distributional RL can also be extended to continuous control settings. Extending our algorithm to continuous settings is another interesting topic. Furthermore, in our algorithm we adopted the concept of selecting the best target to learn. Can this intuition be applied to areas other than RL?

Finally, we also noticed that most of the games we fail to reach human-level performance involves complex rules that requires exploration based policies, such as \textit{Montezuma Revenge} and \textit{Venture}. Integrating distributional RL will be another potential direction as in \citep{tang2018exploration}. In general, we believe that our algorithm can be viewed as a natural extension of existing distributional RL algorithms, and that distributional RL may integrate greatly with other algorithms to reach higher performance.

\bibliographystyle{plainnat}
\bibliography{references}
\newpage
\section*{Appendix}
\subsection*{Proof for proposition~\ref{proposition:1}}
\prop*
 
\begin{proof}
Note that $F^{-1}_Z$ is non-decreasing. We have
\begin{equation*}
    \begin{split}
        \frac{\partial W_1}{\partial \tau_i}=&\frac{\partial}{\partial \tau_i}(\int_{\tau_{i-1}}^{\tau_i} \left|F^{-1}_{Z}(\omega)-F^{-1}_{Z}(\hat{\tau}_{i-1})\right| d \omega + \int_{\tau_{i}}^{\tau_{i+1}} \left|F^{-1}_{Z}(\omega)-F^{-1}_{Z}(\hat{\tau}_{i})\right| d \omega)\\
        =&\frac{\partial}{\partial \tau_i}(\int_{\tau_{i-1}}^{\hat{\tau}_{i-1}} F^{-1}_{Z}(\hat{\tau}_{i-1})-F^{-1}_{Z}(\omega) d \omega + 
        \int_{\hat{\tau}_{i-1}}^{\tau_{i}} F^{-1}_{Z}(\omega)-F^{-1}_{Z}(\hat{\tau}_{i-1}) d \omega +\\
        &
        \int_{\tau_{i}}^{\tau_{i+1}} \left|F^{-1}_{Z}(\omega)-F^{-1}_{Z}(\hat{\tau}_{i})\right| d \omega))\\
        =&\frac{\tau_i-\tau_{i-1}}{4}\frac{\partial}{\partial \tau_i}F^{-1}_Z(\hat{\tau}_{i-1})+F^{-1}_Z(\tau_i)-F^{-1}_Z(\hat{\tau}_{i-1})-\frac{\tau_i-\tau_{i-1}}{4}\frac{\partial}{\partial \tau_i}F^{-1}_Z(\hat{\tau}_{i-1})+\\
        &\frac{\partial}{\partial \tau_i}(\int_{\tau_{i}}^{\tau_{i+1}} \left|F^{-1}_{Z}(\omega)-F^{-1}_{Z}(\hat{\tau}_{i})\right| d \omega))\\
        =&F^{-1}_Z(\tau_i)-F^{-1}_Z(\hat{\tau}_{i-1})+\frac{\partial}{\partial \tau_i}(\int_{\tau_{i}}^{\tau_{i+1}} \left|F^{-1}_{Z}(\omega)-F^{-1}_{Z}(\hat{\tau}_{i})\right| d \omega))\\
        =&F^{-1}_Z(\tau_i)-F^{-1}_Z(\hat{\tau}_{i-1})+F^{-1}_Z(\tau_i)-F^{-1}_Z(\hat{\tau}_{i})\\
        =&2F^{-1}_Z(\tau_i)-F^{-1}_Z(\hat{\tau}_{i-1})-F^{-1}_Z(\hat{\tau}_{i})
    \end{split}
\end{equation*}

As $F^{-1}_Z$ is non-decreasing we have $\frac{\partial W_1}{\partial \tau_i}|_{\tau_i=\tau_{i-1}}\leq0$ and $\frac{\partial W_1}{\partial \tau_i}|_{\tau_i=\tau_{i+1}}\geq0$. Recall that $F^{-1}_Z$ is continuous, so $\exists \tau_i \in (\tau_{i-1}, \tau_{i+1}) \ s.t.\ \frac{\partial W_1}{\partial \tau_i}=0$.
\end{proof}

\newpage

\subsection*{Hyper-parameter sheet}
\begin{table*}[ht]
	\centering
    \begin{tabular}{l|l|l}
Hyper-parameter & IQN & FQF \\ \hline
Learning rate & 0.00005 & 0.00005 \\
Optimizer & Adam & Adam \\
Batch size & 32 & 32 \\
Discount factor & 0.99 & 0.99 \\
Fraction proposal network learning rate & None & 2.5e-9 \\
Fraction proposal network optimizer & None & RMSProp \\
	\end{tabular}
\caption{hyper-parameter list}  \label{hyperparameter}
\end{table*}
We sweep the learning rate of fraction proposal network among (0, 2.5e-5) and finally fix this learning rate as 2.5e-9. For the training of fraction proposal network, we use RMSProp optimizer. Note that though the fraction proposal network takes the state embedding of original IQN as input, we only apply gradient to our new introduced parameter and do not back-propagate the gradient to the convolution layers.

\subsection*{Approximation demonstration}
To demonstrate how FQF provides a better quantile function approximation, figure \ref{fig:toy} provides plots of a toy case with different distributional RL algorithm's approximation of a known quantile function, from which we can see how quantile fraction selection affects distribution approximation.

\begin{figure}[!htb]
   \begin{minipage}{0.48\textwidth}
     \centering
     \includegraphics[width=1.0\linewidth]{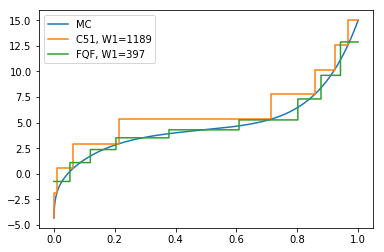}
     \caption*{(a)}
   \end{minipage}\hfill
   \begin{minipage}{0.48\textwidth}
     \centering
     \includegraphics[width=1.0\linewidth]{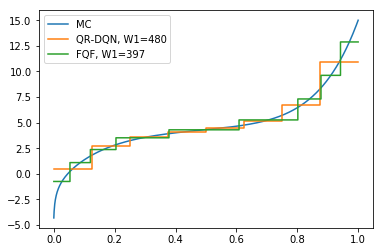}
     \caption*{(b)}
   \end{minipage}
   \caption{Demonstration of quantile function approximation on a toy case. $W_1$ denotes 1-Wasserstein distance between the approximated function and the one obtained through MC method.}
   \label{fig:toy}
\end{figure}

\subsection*{Varying number of quantile fractions}
Table \ref{N-table} gives mean scores of FQF and IQN over 6 Atari games, using different number of quantile fractions, i.e. $N$. For IQN, the selection of $N'$ is based on the highest score of each column given in \textit{Figure 2} of \citep{dabney2018implicit}. 

\begin{table*}[ht!]
	\centering
    \begin{tabular}{llllllll}
		\hline \hline
		& N=8 & N=32 & N=64\\
		\midrule
		IQN & 60.2 & 91.5 & 64.4\\
		FQF & 83.2 & 124.6 & 69.5
	\end{tabular}
\caption{Mean scores across 6 Atari 2600 games, measured as percentages of human baseline. Scores are averages over 3 seeds.} \label{N-table}
\end{table*}

Intuitively, the advantage of trained quantile fractions compared to random ones will be more observable at smaller $N$. At larger $N$ when both trained quantile fractions and random ones are densely distributed over $[0, 1]$, the differences between FQF and IQN becomes negligible. However from table~\ref{N-table} we see that even at large $N$, FQF performs slightly better than IQN.

\subsection*{Visualizing proposed quantile fraction}
In figure~\ref{demo}, we select a half-trained \textit{Kungfu Master} agent with $N=8$ to provide a case study of FQF. The reason why we choose a half-trained agent instead of a fully-trained agent is so that the distribution of $Q$ is not a deterministic one. Note that theoretically the quantile function should be non-decreasing, however from the example we can see that the learned quantile function might not always follow this property, and this phenomenon further motivates a quite interesting future work that leverages the non-decreasing property as prior knowledge for quantile function learning. The figure shows how the interval between proposed quantile fractions (i.e., the output of the softmax layer that sums to 1. See Section 3.4 for details) vary during a single run.

\begin{figure*}[htb!]
    \centering
    \includegraphics[width=1.0\textwidth]{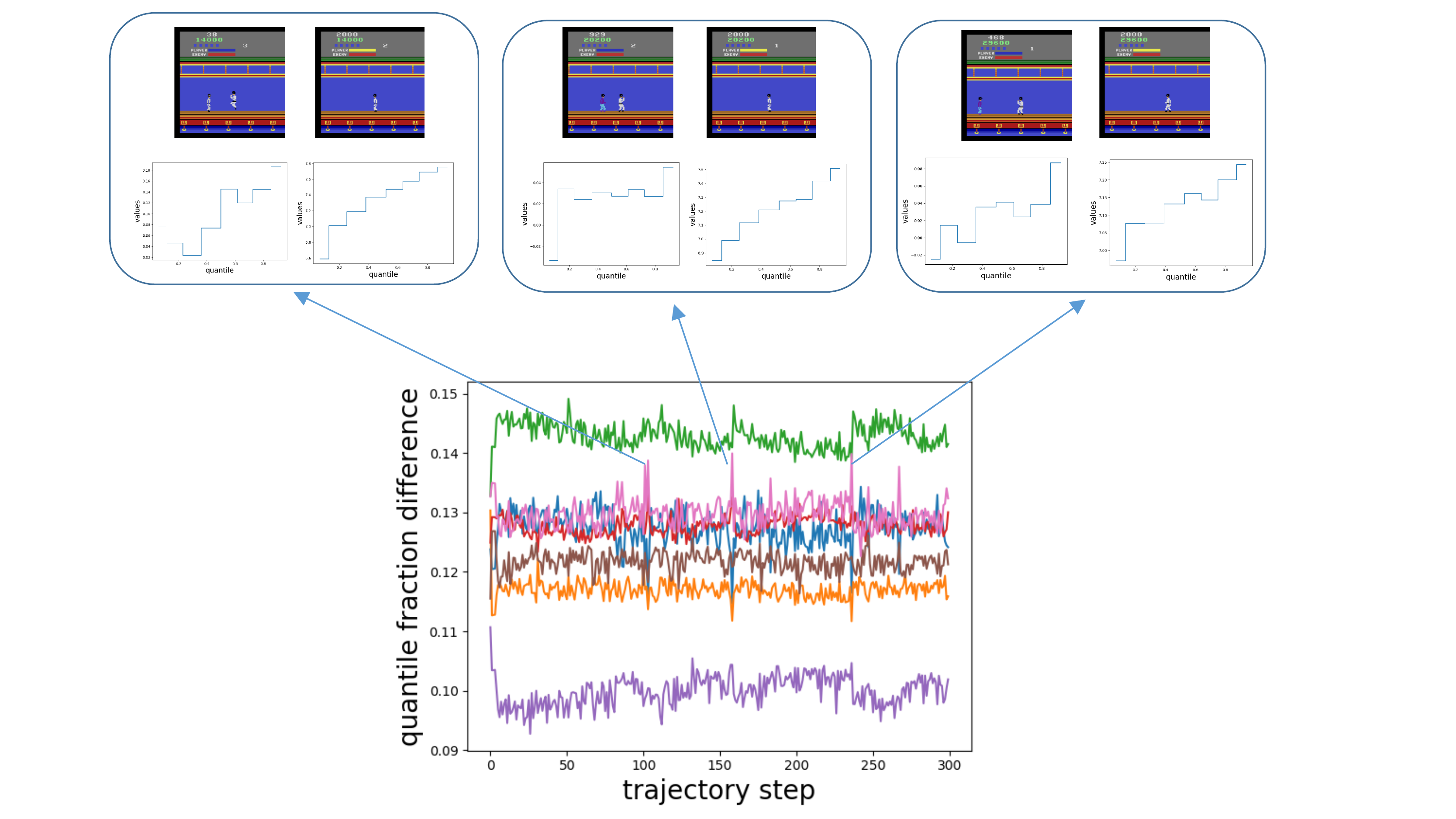}
    \caption{Interval between adjacent proposed quantile fractions for states at each time step in a single run. Different colors refer to different adjacent fractions' intervals, e.g. green curve refers to $\tau_2-\tau_1$.}
    \label{demo}
\end{figure*}

Whenever there appears an enemy behind the character, we see a spike in the fraction interval, indicating that proposed fraction is very different from that of following states without enemies. This suggests that the fraction proposal network is indeed state dependent and is able to provide different quantile fractions accordingly.

\newpage
\subsection*{ALE Scores}
\begin{table*}[ht]
\scriptsize
	\centering
    \begin{tabular}{l|l|l|l|l|l|l|l}
GAMES & RANDOM & HUMAN & DQN & PRIOR.DUEL. & QR-DQN & IQN & FQF \\ \hline
Alien & 227.8 & 7127.7 & 1620.0 & 3941.0 & 4871.0 & 7022.0 & \textbf{16754.6}\\ 
Amidar & 5.8 & 1719.5 & 978.0 & 2296.8 & 1641.0 & 2946.0 & \textbf{3165.3}\\ 
Assault & 222.4 & 742.0 & 4280.4 & 11477.0 & 22012.0 & \textbf{29091.0} & 23020.1\\ 
Asterix & 210.0 & 8503.3 & 4359.0 & 375080.0 & 261025.0 & 342016.0 & \textbf{578388.5}\\ 
Asteroids & 719.1 & 47388.7 & 1364.5 & 1192.7 & 4226.0 & 2898.0 & \textbf{4553.0}\\ 
Atlantis & 12850.0 & 29028.1 & 279987.0 & 395762.0 & 971850.0 & \textbf{978200.0} & 957920.0\\ 
BankHeist & 14.2 & 753.1 & 455.0 & \textbf{1503.1} & 1249.0 & 1416.0 & 1259.1\\ 
BattleZone & 2360.0 & 37187.5 & 29900.0 & 35520.0 & 39268.0 & 42244.0 & \textbf{87928.6}\\ 
BeamRider & 363.9 & 16926.5 & 8627.5 & 30276.5 & 34821.0 & \textbf{42776.0} & 37106.6\\ 
Berzerk & 123.7 & 2630.4 & 585.6 & 3409.0 & 3117.0 & 1053.0 & \textbf{12422.2}\\ 
Bowling & 23.1 & 160.7 & 50.4 & 46.7 & 77.2 & 86.5 & \textbf{102.3}\\ 
Boxing & 0.1 & 12.1 & 88.0 & 98.9 & \textbf{99.9} & 99.8 & 98.0\\ 
Breakout & 1.7 & 30.5 & 385.5 & 366.0 & 742.0 & 734.0 & \textbf{854.2}\\ 
Centipede & 2090.9 & 12017.0 & 4657.7 & 7687.5 & \textbf{12447.0} & 11561.0 & 11526.0\\ 
ChopperCommand & 811.0 & 7387.8 & 6126.0 & 13185.0 & 14667.0 & 16836.0 & \textbf{876460.0}\\ 
CrazyClimber & 10780.5 & 35829.4 & 110763.0 & 162224.0 & 161196.0 & 179082.0 & \textbf{223470.6}\\ 
DemonAttack & 152.1 & 1971.0 & 12149.4 & 72878.6 & 121551.0 & 128580.0 & \textbf{131697.0}\\ 
DoubleDunk & -18.6 & -16.4 & -6.6 & -12.5 & 21.9 & 5.6 & \textbf{22.9}\\ 
Enduro & 0.0 & 860.5 & 729.0 & 2306.4 & 2355.0 & 2359.0 & \textbf{2370.8}\\ 
FishingDerby & -91.7 & -38.7 & -4.9 & 41.3 & 39.0 & 33.8 & \textbf{52.7}\\ 
Freeway & 0.0 & 29.6 & 30.8 & 33.0 & \textbf{34.0} & \textbf{34.0} & 33.7\\ 
Frostbite & 65.2 & 4334.7 & 797.4 & 7413.0 & 4384.0 & 4324.0 & \textbf{16472.9}\\ 
Gopher & 257.6 & 2412.5 & 8777.4 & 104368.2 & 113585.0 & 118365.0 & \textbf{121144.0}\\ 
Gravitar & 173.0 & 3351.4 & 473.0 & 238.0 & 995.0 & 911.0 & \textbf{1406.0}\\ 
Hero & 1027.0 & 30826.4 & 20437.8 & 21036.5 & 21395.0 & 28386.0 & \textbf{30926.2}\\ 
IceHockey & -11.2 & 0.9 & -1.9 & -0.4 & -1.7 & 0.2 & \textbf{17.3}\\ 
Jamesbond & 29.0 & 302.8 & 768.5 & 812.0 & 4703.0 & 35108.0 & \textbf{87291.7}\\ 
Kangaroo & 52.0 & 3035.0 & 7259.0 & 1792.0 & 15356.0 & \textbf{15487.0} & 15400.0\\ 
Krull & 1598.0 & 2665.5 & 8422.3 & 10374.0 & \textbf{11447.0} & 10707.0 & 10706.8\\ 
KungFuMaster & 258.5 & 22736.3 & 26059.0 & 48375.0 & 76642.0 & 73512.0 & \textbf{111138.5}\\ 
MontezumaRevenge & 0.0 & 4753.3 & 0.0 & 0.0 & 0.0 & 0.0 & 0.0\\ 
MsPacman & 307.3 & 6951.6 & 3085.6 & 3327.3 & 5821.0 & 6349.0 & \textbf{7631.9}\\ 
NameThisGame & 2292.3 & 8049.0 & 8207.8 & 15572.5 & 21890.0 & \textbf{22682.0} & 16989.4\\ 
Phoenix & 761.4 & 7242.6 & 8485.2 & 70324.3 & 16585.0 & 56599.0 & \textbf{174077.5}\\ 
Pitfall & -229.4 & 6463.7 & -286.1 & 0.0 & 0.0 & 0.0 & 0.0\\ 
Pong & -20.7 & 14.6 & 19.5 & 20.9 & \textbf{21.0} & \textbf{21.0} & \textbf{21.0}\\ 
PrivateEye & 24.9 & 69571.3 & 146.7 & 206.0 & \textbf{350.0} & 200.0 & 140.1\\ 
Qbert & 163.9 & 13455.0 & 13117.3 & 18760.3 & \textbf{572510.0} & 25750.0 & 27524.4\\ 
Riverraid & 1338.5 & 17118.0 & 7377.6 & 20607.6 & 17571.0 & 17765.0 & \textbf{23560.7}\\ 
RoadRunner & 11.5 & 7845.0 & 39544.0 & 62151.0 & \textbf{64262.0} & 57900.0 & 58072.7\\ 
Robotank & 2.2 & 11.9 & 63.9 & 27.5 & 59.4 & 62.5 & \textbf{75.7}\\ 
Seaquest & 68.4 & 42054.7 & 5860.6 & 931.6 & 8268.0 & \textbf{30140.0} & 29383.3\\ 
Skiing & -17098.1 & -4336.9 & -13062.3 & -19949.9 & -9324.0 & -9289.0 & \textbf{-9085.3}\\ 
Solaris & 1236.3 & 12326.7 & 3482.8 & 133.4 & 6740.0 & \textbf{8007.0} & 6906.7\\ 
SpaceInvaders & 148.0 & 1668.7 & 1692.3 & 15311.5 & 20972.0 & 28888.0 & \textbf{46498.3}\\ 
StarGunner & 664.0 & 10250.0 & 54282.0 & 125117.0 & 77495.0 & 74677.0 & \textbf{131981.2}\\ 
Tennis & -23.8 & -9.3 & 12.2 & 0.0 & \textbf{23.6} & \textbf{23.6} & 22.6\\ 
TimePilot & 3568.0 & 5229.2 & 4870.0 & 7553.0 & 10345.0 & 12236.0 & \textbf{14995.2}\\ 
Tutankham & 11.4 & 167.6 & 68.1 & 245.9 & 297.0 & 293.0 & \textbf{309.2}\\ 
UpNDown & 533.4 & 11693.2 & 9989.9 & 33879.1 & 71260.0 & \textbf{88148.0} & 75474.4\\ 
Venture & 0.0 & 1187.5 & 163.0 & 48.0 & 43.9 & \textbf{1318.0} & 1112\\ 
VideoPinball & 16256.9 & 17667.9 & 196760.4 & 479197.0 & 705662.0 & 698045.0 & \textbf{799155.6}\\ 
WizardOfWor & 563.5 & 4756.5 & 2704.0 & 12352.0 & 25061.0 & 31190.0 & \textbf{44782.6}\\ 
YarsRevenge & 3092.9 & 54576.9 & 18098.9 & \textbf{69618.1} & 26447.0 & 28379.0 & 27691.2\\ 
Zaxxon & 32.5 & 9173.3 & 5363.0 & 13886.0 & 13113.0 & \textbf{21772.0} & 15179.5\\ 

	\end{tabular}
\caption{Raw scores for a single seed across all games, starting with 30 no-op actions.}  \label{bigtable}
\end{table*}

To align with previous works, the scores are evaluated under 30 no-op setting. As the sticky action evaluation setting proposed by~\cite{machado2018revisiting} is generally considered more meaningful in the RL community, we will add results under sticky-action evaluation setting after the conference.

\end{document}